\newtheorem{lemma}{Lemma}
\title{
Least Angle Regression in Tangent Space and 
LASSO for Generalized Linear Models\thanks{
This work was partly supported by 
JSPS KAKENHI Grant Number JP18K18008 
and JST CREST Grant Number JPMJCR1763.
}
}
\author{
  Yoshihiro Hirose\thanks{hirose@ist.hokudai.ac.jp}\\
  Hokkaido University, Japan
}
\date{}
\begin{document}

\maketitle

\begin{abstract}
This study proposes sparse estimation methods for the generalized linear models, 
which run one of least angle regression (LARS) and 
least absolute shrinkage and selection operator (LASSO) 
in the tangent space of the manifold of the statistical model. 
This study approximates the statistical model 
and subsequently uses exact calculations. 
LARS was proposed as an efficient algorithm for parameter estimation and variable selection for the normal linear model. 
The LARS algorithm is described in terms of Euclidean geometry regarding the correlation as the metric of the parameter space. 
Since the LARS algorithm only works in Euclidean space, 
we transform a manifold of the statistical model into the tangent space at the origin. 
In the generalized linear regression, 
this transformation allows us to run the original LARS algorithm 
for the generalized linear models. 
The proposed methods are efficient and perform well. 
Real-data analysis indicates that the proposed methods output similar results 
to that of the $l_1$-regularized maximum likelihood estimation for the aforementioned models. 
Numerical experiments reveal that  
our methods work well 
and they may be better than the $l_1$-regularization 
in generalization, parameter estimation, and model selection. \\
Keywords: 
Exponential family, 
Generalized linear regression,  
Information geometry, 
Sparse modeling
\end{abstract}

\section{Introduction}
\label{intro}

We propose sparse estimation methods for generalized linear models (GLM). 
One of the proposed methods is based on least angle regression (LARS) 
\cite{EHJT2004} 
and is described in terms of information geometry. 
The main features of our approach are 
i) we use an approximation of a statistical model and do not use the statistical model itself, 
and ii) the proposed methods are 
calculated exactly, 
which allows us to compute the estimators efficiently. 
A few extensions of LARS, which are based on 
information, Riemannian, and differential geometry, 
have been proposed in the literature such as 
\cite{HK2010} and \cite{AMW2013}, for example.  
The existing methods take advantage of a dual structure of a model manifold, 
which requires computational costs. 
Our method utilizes a part of the dual structure 
and uses the original LARS algorithm in the tangent space. 
The proposed method enables us to compute the estimator easily. 
Furthermore, 
we demonstrate that least absolute shrinkage and selection operator 
(LASSO) 
\cite{T1996} 
for the normal linear model is also available in the tangent space.

Sparse modeling has been  extensively investigated 
in this two decades. 
As a representative method, 
LASSO has motivated many researchers in statistics, machine learning, and other fields. 
LASSO was proposed 
as an estimation and variable-selection method for the normal linear model. 
It minimizes the $l_1$-regularized least square with a tuning parameter. 
Various generalizations have been proposed for other problems. 
For example, 
\cite{PH2007} and \cite{YL2007} 
treat the generalized linear regression
and Gaussian graphical models, respectively. 
See also 
\cite{HTF2009}. 

LARS was proposed for the same problem as LASSO. 
The LARS algorithm is very efficient, 
and can also compute the LASSO estimator if a minor change is added. 
The LARS algorithm uses only correlation coefficients between the response and explanatory variables. 
The algorithm is therefore described in terms of Euclidean geometry.

Information geometry is a Riemannian-geometric framework for statistics and other fields 
\cite{A1985, A2016, AN2000, AJLS2017, KV1997, MR1993}.  
In this framework, 
we treat a statistical model as a Riemannian manifold and 
take advantage of its geometrical properties for estimation, test, and other tasks. 
Each probability distribution is treated as a point in the manifold. 
For example, 
estimation problem for the generalized linear regression 
can be described in terms of the geometry. 
The GLM is treated as a manifold and 
an estimator assigns a point therein to an observed data. 
The maximum likelihood estimator (MLE) uses a kind of projection.

Some extensions of LARS have been proposed based on the information geometry 
of the exponential family of distributions. 
\cite{HK2010} and \cite{AMW2013} 
proposed different extensions of LARS, 
that take advantage of the dual structure of the model manifold. 
Their works are theoretically natural and can be extended to other models than the GLM 
\cite{HK2013, HK2015}. 
However, the existing methods require many iterations of approximation computation, 
which is inevitable for treating more complicated objects than Euclidean space. 
For example, 
\cite{AMW2013} 
treated many tangent spaces that each to an estimate 
while our methods use only one tangent space. 
\cite{AMW2013} 
stated 
``DGLARS method may be computationally more expensive than other customized techniques'' 
for the $l_1$-regularization method. 
We aim to provide as an efficient method as the $l_1$-regularization for the GLM. 
It should be noted that 
our approach is different from that of existing methods. 
We approximate the model manifold with only one tangent space 
and use the exact computation of LARS in the space. 
This approximation is natural from the viewpoint of information geometry. 
The usefulness of our idea is validated by numerical experiments. 
One advantage of our methods is that they do not require additional implementation 
because we can use existing packages. 

The rest of this paper is organized as follows. 
In section \ref{sec:pre}, 
we introduce our problem and the related works. 
In Section \ref{sec:larsts}, 
we propose a sparse estimation method based on LARS. 
Furthermore, LASSO-type estimators are also proposed. 
We compare our methods with the $l_1$-regularization for the GLM 
by performing numerical experiments 
in section \ref{sec:numerical}. 
Finally, our conclusions are presented in section \ref{sec:conclusion}. 
Lemmas and remarks are presented in Appendix \ref{sec:app}.

\section{Problem and Related Method}
\label{sec:pre}

In subsection \ref{subsec:problem}, 
we formulate the problem and introduce our notation. 
In subsections \ref{subsec:lars} and \ref{subsec:lasso}, 
we briefly describe the LARS algorithm and the LASSO estimators, respectively.

\subsection{Problem and Notation}
\label{subsec:problem}

We consider the generalized linear regression, 
which is an estimation problem of an exponential family of probability distributions \cite{A1985, B1986, MN1989}. 
In this regression, 
the expectation $\mu$ of a response $y$ is represented 
by a linear combination of explanatory variables $x_1, x_2, \dots, x_d$ as 
\begin{equation*}
h(\mu^a) = \sum_{i=1}^d x_i^a \theta^i, \,\,(a=1,2,\dots,n), 
\end{equation*}
where $a$ is the index indicating $a$-th sample, 
$h: \mathbb{R} \rightarrow \mathbb{R}$ is a link function, 
$n$ is the sample size, $d$ is the number of the explanatory variables, 
and ${\bm \theta}=(\theta^i)$ is the parameter to be estimated. 
Let $X=(x^a_i)$ be the design matrix, which is an $(n \times d)$-matrix. 
Let ${\bm y} = (y^a)$ and ${\bm \mu} = (\mu^a)$ be the response vector and its expectation, respectively, which are column vectors of length $n$. 

In general, the link function is a function of $\mu^a$ and is not determined uniquely. 
However, in the current study, we focus on the canonical link function, 
which results in useful properties of the exponential family. 
Our method is based on this assumption. 

In terms of probability distributions, 
the aforementioned problem corresponds to estimation for an exponential family of distributions, 
%that is, the GLM, 
\begin{align}
\label{eq:glm}
{\cal M} = \{ p(\cdot|\,{\bm \theta}) |\, {\bm \theta} \in \mathbb{R}^d \}, \\
p(y|\, {\bm \theta}) 
= p(y|\, X, {\bm \theta}) 
= \exp \left\{ {\bm y}^\top X {\bm \theta} - \psi({\bm \theta})  \right\}, \notag
\end{align}
where $\psi: \mathbb{R}^d \rightarrow \mathbb{R}$ is a potential function. 
Our notation takes after that of \cite{A1985} and \cite{B1986}. 
For example, 
this formulation includes 
logistic and Poisson regressions. 

As a special case, 
the normal linear regression uses the link function $h(y) = y$ and a quadratic function as the potential function. 
Another example is the logistic regression, 
where the link function is $h(y) = y/(1-y)$ and the potential function is 
$\psi({\bm \theta}) = \sum_{a=1}^n \log\{ 1+\exp(\sum_{i=1}^d x^a_i \theta^i) \}$.

Herein, we assume that the design matrix $X$ is normalized, 
that is, each column vector has the mean zero and the $l_2$-norm one: 
$\sum_{a=1}^n x^a_i = 0$ and $\sum_{a=1}^n (x^a_i)^2 = 1$ for $i=1,2,\dots,d$. 
Furthermore, we assume that column vectors of $X$ are linearly independent.

\subsection{LARS}
\label{subsec:lars}

We briefly describe the LARS algorithm. 
In subsection \ref{subsec:larsts}, 
we use the LARS algorithm for proposing an estimation method. 
The detail and further discussions on LARS can be found in 
\cite{EHJT2004} and \cite{HTF2009}, for example. 

LARS was proposed 
as an algorithm for parameter estimation and variable selection in the normal linear regression. 
In the LARS algorithm, 
the estimator moves from the origin ${\bm \theta}=0$ 
to the MLE $\hat{\bm \theta}_{\mathrm{MLE}}$ of the full model. 
The full model refers to the linear model including all the explanatory variables. 
The MLE $\hat{\bm \theta}_{\mathrm{MLE}}$ is determined by the design matrix $X$ and the response $\bm y$. 
The detailed algorithm of LARS is shown in Algorithm \ref{algo:lars}, 
where $\hat{\bm \theta}_{(k)}$ is $k$-th estimate the algorithm outputs. 
After $d$ iterations, LARS outputs a sequence of the estimates 
$\hat{\bm \theta}_{(0)}, \hat{\bm \theta}_{(1)}, \dots, \hat{\bm \theta}_{(d)}$. 
\begin{algorithm}%[H]
\caption{The least angle regression (LARS) algorithm}
\SetAlgoLined
\KwData{the design matrix $X$ and the response vector $\bm y$}
\KwResult{the sequence of the LARS estimates $(\hat{\bm \theta}_{(k)})_{k=0,1,\dots,d}$ }
Initialization: $k:=1, \hat{\bm \theta}_{(0)}:=0, \hat{\bm \theta}_{(d)}:=\hat{\bm \theta}_{\mathrm{MLE}}, 
{\bm r}_{(0)} := \hat{\bm \theta}_{(d)} - \hat{\bm \theta}_{(0)} = \hat{\bm \theta}_{\mathrm{MLE}}$ \\
\While{$k < d$}{ 
Calculate the correlations $\hat{\bm c}_{(k)}$ and the active set $I_{(k)}$ of the indices: 
\begin{align*}
\hat{\bm c}_{(k)} := X^\top X {\bm r}_{(k-1)}, \,
\hat{C}_{(k)} := \max_{j}\{ |\hat{c}_{(k),i}| \}, \\
I_{(k)} := \{ i |\, |\hat{c}_{(k),i}| = \hat{C}_{(k)} \}. 
\end{align*}
Using $s_i = \mathrm{sign}\{ \hat{c}_{(k),i} \}\,\, (i \in I_{(k)})$, 
define a bisector of an angle ${\bm w}_{(k)}$ and others: 
\begin{align*}
X_{(k)} := (\dots s_i x_i \dots)_{i \in I_{(k)}}, \,\,
G_{(k)} := X_{(k)}^\top X_{(k)}, \\
A_{(k)} := ({\bm 1}_k^\top G_{(k)}^{-1} {\bm 1}_k)^{-1/2}, \, 
{\bm w}_{(k)} := A_{(k)} G_{(k)}^{-1} {\bm 1}_k, \\
{\bm a}_{(k)} := X^\top X_{(k)} {\bf w}_{(k)}. 
\end{align*}
Define the next estimate $\hat{\bm \theta}_{(k)}$ as 
\begin{align*}
\begin{cases}
(\hat{\theta}_{(k)}^i)_{i \in I_{(k)}} := (\hat{\theta}_{(k-1)}^i)_{i \in I_{(k)}} + \gamma \mathrm{diag}(s_i)_{i \in I_{(k)}} {\bm w}_{(k)}, \\
(\hat{\theta}_{(k)}^i)_{i \in I_{(k)}^{\mathrm{c}}} := 0
\end{cases}
\end{align*}
with 
\begin{align*}
\hat{\gamma} := \min_{j\in I_{(k)}^c}{}^{+} \left\{ \frac{\hat{C}_{(k)}-\hat{c}_{(k),j}}{A_{(k)}-a_{(k),j}}, \frac{\hat{C}_{(k)}+\hat{c}_{(k),j}}{A_{(k)}+a_{(k),j}} \right\} > 0, %\\
\end{align*}
where $\min^+ \{ a_1,\dots,a_N\} := \min\{a_i |\, a_i >0 \,(i=1,\dots,N)\}$. 
\\
Set ${\bm r}_{(k)} := \hat{\bm \theta}_{(d)} - \hat{\bm \theta}_{(k)}$ and $k:=k+1$. 
}
\label{algo:lars}
\end{algorithm}

The LARS algorithm is presented in Figure \ref{fig:lars}. 
Figures \ref{fig:lars_f} and \ref{fig:lars_b} indicate 
the estimator's move and the residual's move, respectively, 
in the parameter space $\mathbb{R}^d$ when $d=2$. 
The LARS estimator
i) selects an element of the parameter that forms a least angle 
between the residual ${\bm r}_{(k)} = \hat{\bm \theta}_{\mathrm{MLE}}-\hat{\bm \theta}_{(k)}$ 
and $\theta^i$-axis, 
and ii) uses it as a trajectory in the form of the bisector of an angle. 
The LARS algorithm is described in terms of Euclidean geometry and 
can be computed efficiently. 
Furthermore, $X^\top X$ plays an important role in the LARS algorithm, 
which is one of our motivations for considering the tangent space of a statistical model.

\begin{figure}[t]
 \begin{minipage}{0.5\hsize} %0.5\hsize
  \begin{center}
  \includegraphics[width=\textwidth]{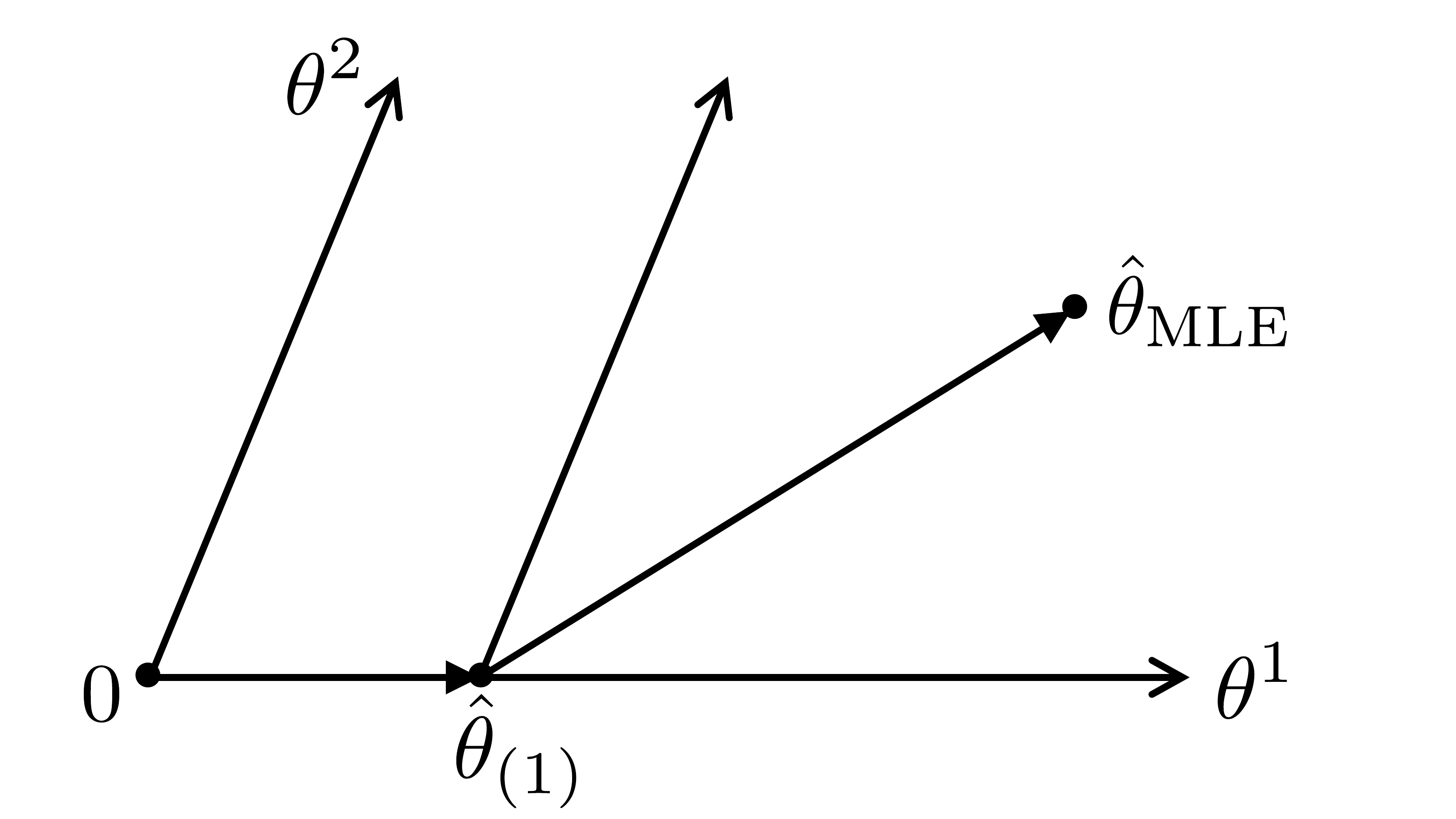}
  \end{center}
  \vspace{-5mm}
  \subcaption{The move of the estimator}
  \label{fig:lars_f}
 \end{minipage}
 \begin{minipage}{0.5\hsize} %0.5\hsize
  \begin{center}
  \includegraphics[width=\textwidth]{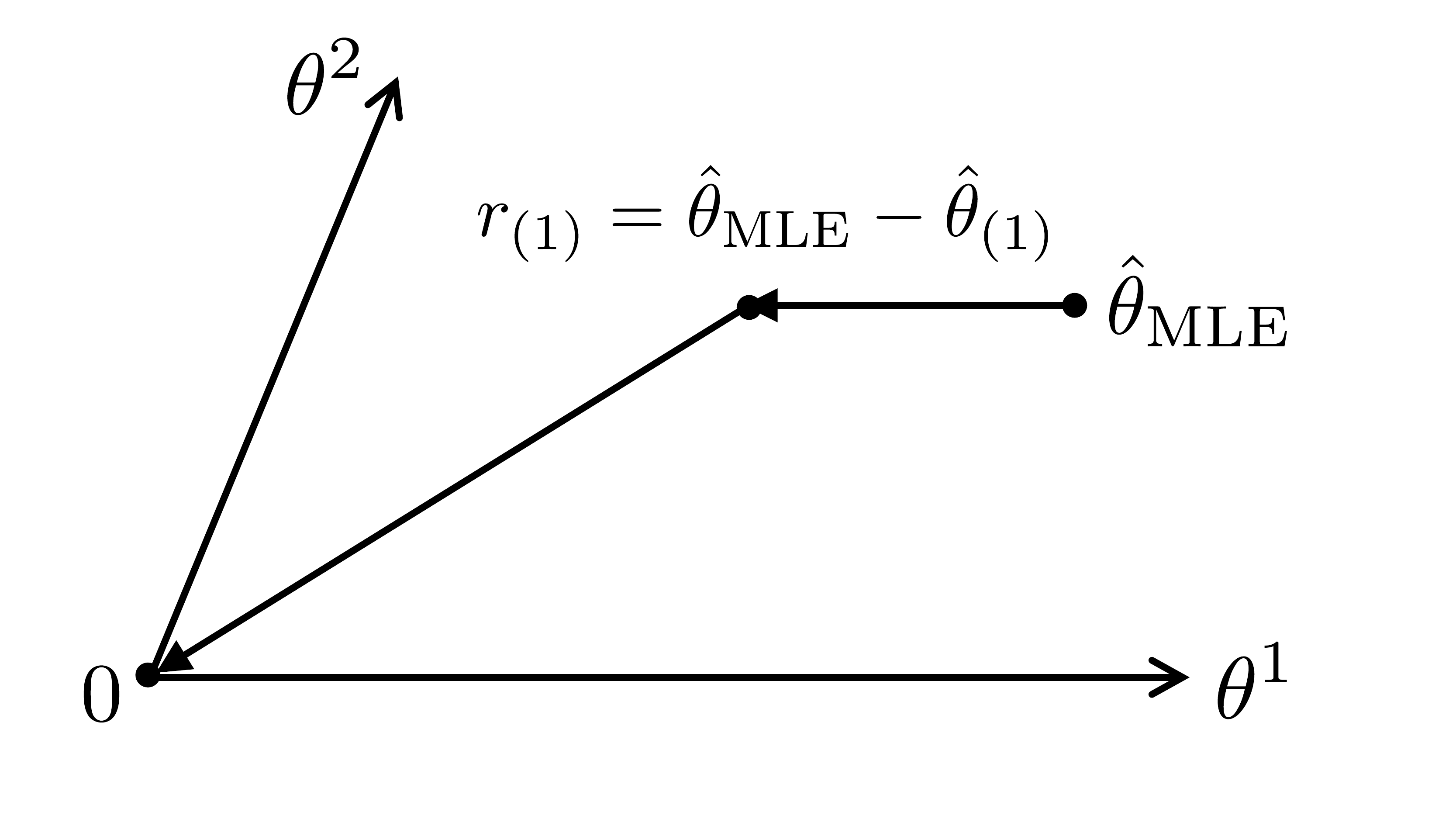}
  \end{center}
  \vspace{-5mm}
  \subcaption{The move of the residual}
  \label{fig:lars_b}
 \end{minipage}
 \caption{
The LARS algorithm when there are two explanatory variables. 
The parameter space is $\mathbb{R}^2$. 
$\hat{\bm \theta}_{\mathrm{MLE}}$ is the MLE of the full model. 
In this example, $\theta^1$ is selected at first iteration, $I_{1} = \{1\}$.  
The first estimate is $\hat{\bm \theta}_{(1)}$ and its second element is zero. 
The second estimate is $\hat{\bm \theta}_{(2)} = \hat{\bm \theta}_{\mathrm{MLE}} 
= \hat{\bm \theta}_{(1)} + \hat{\gamma}{\bm w}_{(2)}$. 
In Figure \ref{fig:lars_f}, 
the estimator moves along the bisector of an angle from $\hat{\bm \theta}_{(1)}$ 
to the second estimate $\hat{\bm \theta}_{(2)}$. 
Figure \ref{fig:lars_b} is another interpretation of the LARS algorithm. 
The residual ${\bm r}({\bm \theta}) = \hat{\bm \theta}_{\mathrm{MLE}} - {\bm \theta}$ 
moves from $\hat{\bm \theta}_{\mathrm{MLE}}$ to $0$ 
 }
\label{fig:lars}
 \end{figure}

\subsection{LASSO}
\label{subsec:lasso}

LASSO is an optimization problem for parameter estimation 
and variable selection in the normal linear regression. 
It solves the minimization problem 
\begin{equation*}
\label{eq:lasso1}
\min_{{\bm \theta} \in \mathbb{R}^d} 
\left\{ \| {\bm y}-X{\bm \theta} \|_2^2 + \lambda \| {\bm \theta} \|_1 \right\}, 
\end{equation*}
where $\lambda \geq 0$ is a tuning parameter. 
The path of the LASSO estimator when 
$\lambda$ 
varies can be made using the LARS algorithm with a minor modification.

LASSO can be applied to the GLM as the $l_1$-regularized MLE, 
which is the minimization problem 
\begin{align}
\label{eq:lassoglm}
\min_{{\bm \theta} \in \mathbb{R}^d} 
\left\{ -{\bm y}^\top X {\bm \theta} + \psi({\bm \theta})  + \lambda\| {\bm \theta} \|_1\right\}. 
\end{align}
For example, see 
\cite{PH2007}.

\section{The Proposed Methods}
\label{sec:larsts}

Our main idea, which is very simple, is to run the LARS algorithm in the tangent space of the model manifold. 
Although the idea appears to be extremely simple, 
it works well as is illustrated in sections \ref{sec:larsts} and \ref{sec:numerical}. 

In subsection \ref{subsec:ig}, 
we introduce information geometry used herein. 
In subsection \ref{subsec:larsts}, 
we propose \textit{LARS in tangent space}, 
which is an extension of the original LARS to the GLM. 
The proposed method is 
identical to the original LARS when applied to the normal linear model. 
In subsection \ref{subsec:tlasso}, 
we propose other methods that are related with LASSO. 
Subsection \ref{subsec:others} explains the difference 
between the proposed and the existing methods.

\subsection{Information Geometry}
\label{subsec:ig}

We briefly introduce some tools from information geometry, 
including model manifold, tangent space, and exponential map (Figure \ref{fig:tangent}). 
For details, see 
\cite{A1985, A2016, AN2000, AJLS2017, KV1997, MR1993}. 

In the generalized linear regression, 
we need to select one distribution from 
the exponential family \eqref{eq:glm}.  
The parameter $\bm \theta$ works as a coordinate system in the manifold $\cal M$. 

The tangent space $T_p {\cal M}$ at a point $p \in {\cal M}$ is a linear space consisting of directional derivatives, that is, 
$T_p {\cal M} = \{ v = \sum_{i=1}^d v^i \partial_i |\, v^i \in \mathbb{R}\}$, 
where $\partial_i := \partial/\partial \theta^i$. 
We consider the tangent space $T_{p(\cdot|\,0)} {\cal M}$ at $p(\cdot|\,0)$. 
For simplicity, we call $p(\cdot|\,0)$ and $T_{p(\cdot|\,0)} {\cal M}$, 
the origin and the tangent space $T_0 {\cal M}$ at the origin, respectively. 

Any pair of two vectors in $T_0 {\cal M}$ has its inner product. 
The inner product is determined by the Fisher information matrix $G=G(0)=(g_{ij}(0))$: 
\begin{equation*}
g_{ij}({\bm \theta}) = \mathrm{E}\left[ \partial_i l({\bm \theta}) \partial_j l({\bm \theta}) \right], 
\end{equation*}
where $l({\bm \theta}) = \log p(y|\,{\bm \theta})$ is the log-likelihood. 
Using the Fisher metric $G$, 
the inner product of $v_1 = \sum_{i=1}^d v_1^i \partial_i$ and $v_2 = \sum_{i=1}^d v_2^j \partial_j$ 
is given by 
\begin{equation*}
\langle v_1, v_2 \rangle 
= \sum_{i=1}^d \sum_{j=1}^d v_1^i v_2^j \langle \partial_i, \partial_j \rangle 
= \sum_{i=1}^d \sum_{j=1}^d v_1^i v_2^j g_{ij}. 
\end{equation*}

In the generalized linear regression, 
the Fisher metric $G$ at $T_0 {\cal M}$ is proportional to the correlation matrix $X^\top X$ of the explanatory variables, 
that is, $G = c X^\top X$ for some $c > 0$. 
This is why we use the tangent space at the origin. 
For details, see 
subsection \ref{subsec:metric}.

A point in the tangent space $T_0 {\cal M}$ can be identified with a point in $\cal M$ 
via an exponential map. 
We introduce the e-exponential map 
$\mathrm{Exp}_0: T_0 {\cal M} \rightarrow {\cal M}$, 
which is defined as follows. 
For $v = \sum v^i \partial_i \in T_0 {\cal M}$, 
let $\mathrm{Exp}_0(v) = p(\cdot|\,{\bm v}) \in {\cal M}$ with ${\bm v} = (v^i)$. 
Our problem is estimation for the GLM and 
the parameter is the regression coefficient vector ${\bm \theta} \in \mathbb{R}^d$. 
Therefore, we can avoid technical difficulties of an exponential map. 
The map $\mathrm{Exp}_0$ is a bijection from $T_0 {\cal M}$ to ${\cal M}$. 
For details, see subsection \ref{subsec:e-exp}. 

\begin{figure}
\begin{minipage}{0.5\hsize} %0.5\hsize
\begin{center}
  \includegraphics[width=\textwidth]{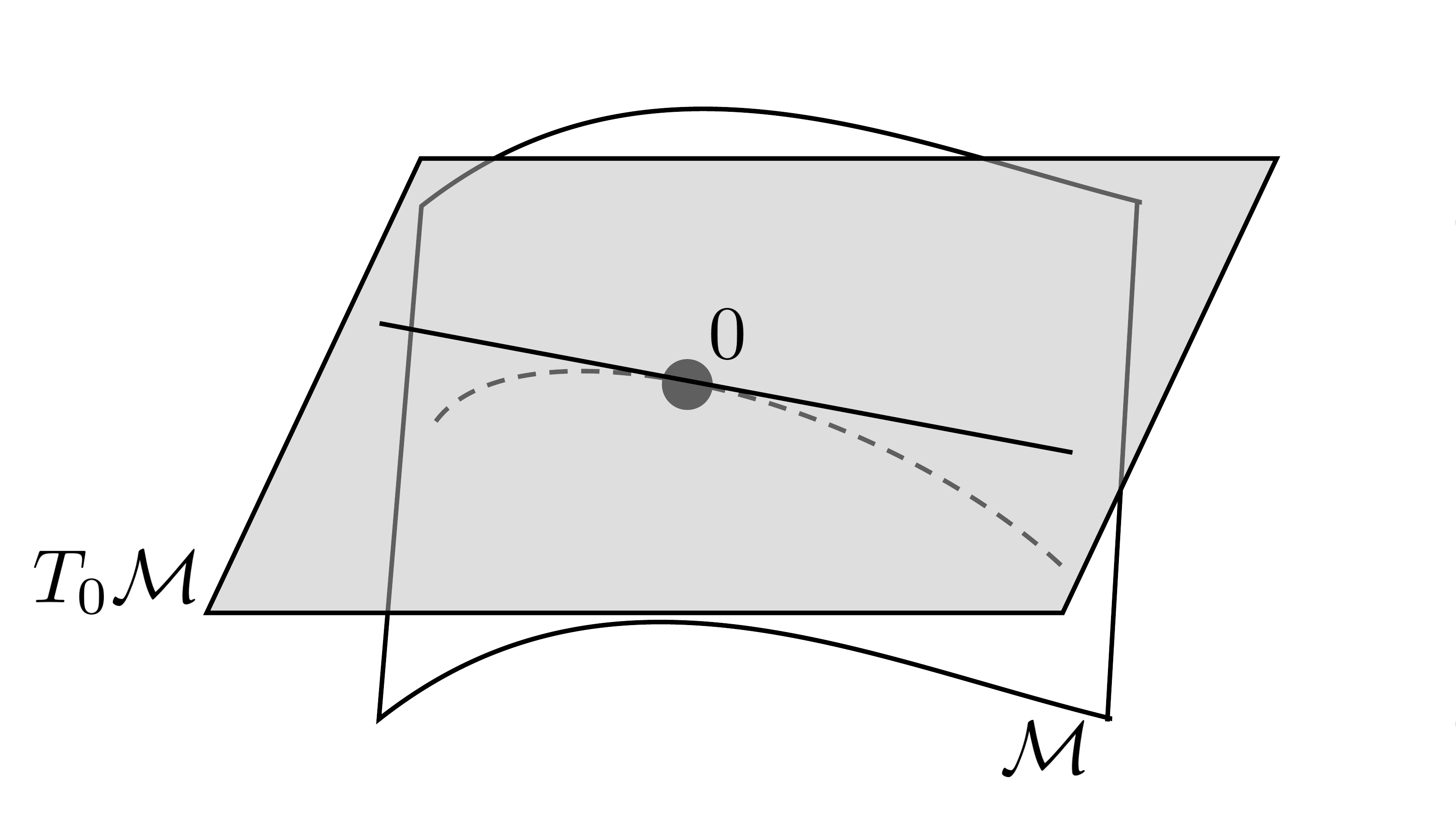}
  \end{center}
  \vspace{-5mm}
  \subcaption{The standard flatness perspective}
  \label{fig:manifold1}
 \end{minipage}
\begin{minipage}{0.5\hsize} %0.5\hsize
\begin{center}
  \includegraphics[width=\textwidth]{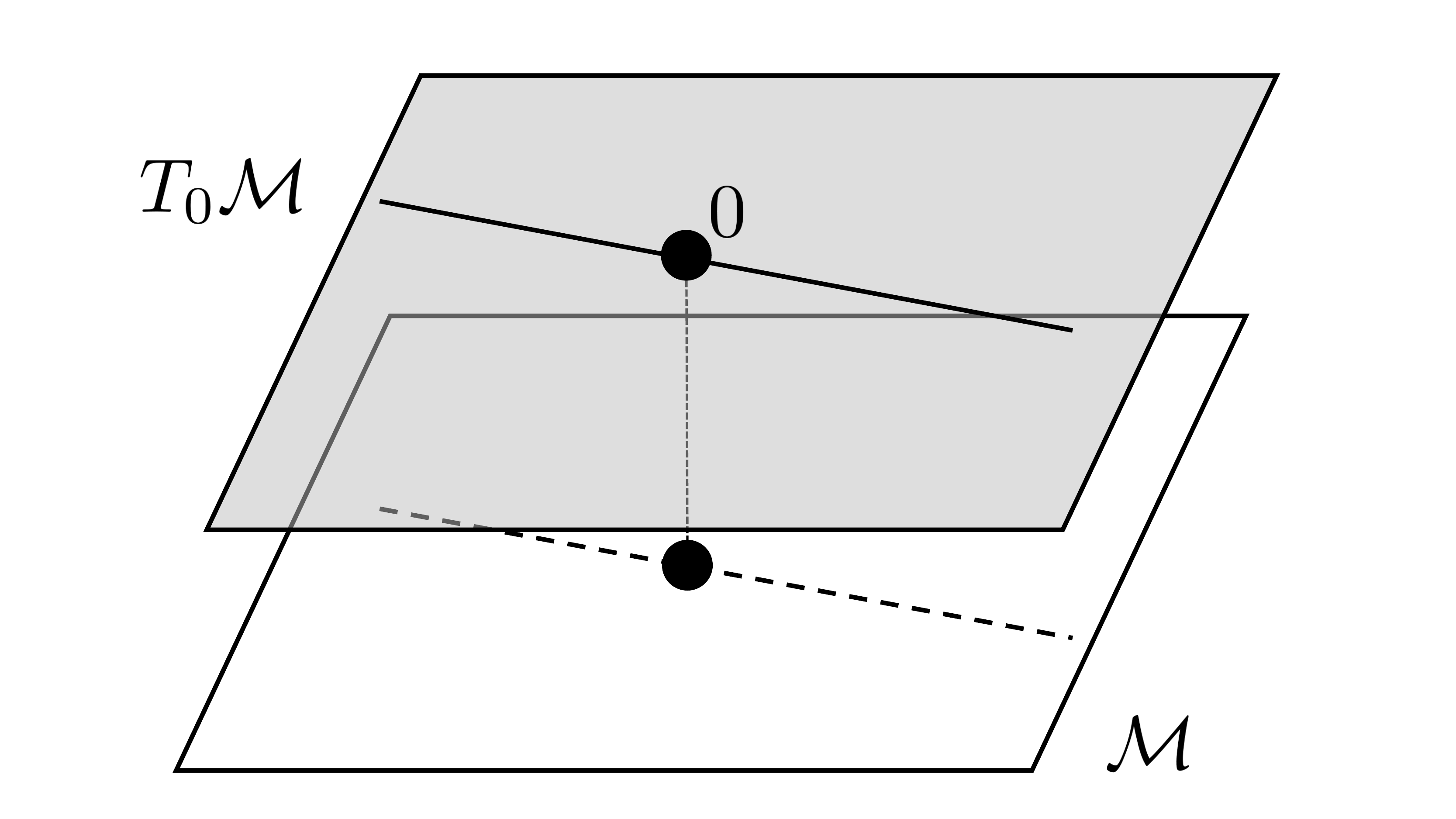}
  \end{center}
  \vspace{-5mm}
  \subcaption{The e-connection perspective}
  \label{fig:manifold2}
 \end{minipage}  
\caption{A statistical manifold $\cal M$ and the tangent space $T_0{\cal M}$ at the origin. 
The white surface is $\cal M$ and the gray plane is $T_0{\cal M}$. 
$\cal M$ is curved from the standard perspective while 
it is flat from the e-connection perspective. 
A point in ${\cal M}$ corresponds to a point in $T_0 {\cal M}$ through the e-exponential map. 
Furthermore, a curve (strictly an e-geodesic) in ${\cal M}$ corresponds to a line in $T_0 {\cal M}$. 
The former is a broken line and the latter is a solid line in the figure 
}
\label{fig:tangent}
\end{figure}

For readers familiar with information geometry, 
we make an additional remark. 
The model manifold $\cal M$ of the GLM is e-flat 
and the regression coefficient ${\bm \theta}$ is an e-affine coordinate system of $\cal M$. 
$\{ \partial_i \}$ is the natural basis of $T_0 {\cal M}$ with respect to 
the coordinate system ${\bm \theta}$. 
Each coordinate axis of $\theta^i$ in $\cal M$ corresponds to $\partial_i$-axis in $T_0 {\cal M}$ via the e-exponential map. 

In the following, 
we also use another representation of $T_0 {\cal M}$. 
This representation is useful for our purpose: 
$T_0 {\cal M} = \{ X {\bm \theta} |\, {\bm \theta} \in \mathbb{R}^d\}$. 
In our notation, $X {\bm \theta}$ also indicates $\sum \theta^i \partial_i$ in the tangent space $T_0 {\cal M}$, 
not only a point $p(\cdot|\, {\bm \theta}) \in {\cal M}$. 
However, we believe it is clear 
because a vector in the tangent space and a point in $\cal M$ 
are identified through the exponential map $\mathrm{Exp}_0$. 

\subsection{LARS in Tangent Space}
\label{subsec:larsts}

The main idea of the proposed method is to run LARS in the tangent space $T_0 {\cal M}$. 
First, we correspond the model manifold to the tangent space $T_0 {\cal M}$ %. 
using the e-exponential map. 
After this mapping, the original LARS algorithm is used for our computation. 
However, 
we do not use the response ${\bm y}$ directly; 
we introduce 
a virtual response $\hat{\bm y}$. 
The LARS algorithm outputs a sequence of parameter estimates, 
the length of which is the same as the dimension of the parameter $\bm \theta$. 
Finally, the estimates are mapped to the model manifold.

Before running the original LARS algorithm, 
we introduce the virtual response $\hat{\bm y}$. 
The virtual response $\hat{\bm y}$ is defined using the design matrix $X$ 
and the MLE $\hat{\bm \theta}_{\mathrm{MLE}}$ of the full model: 
$\hat{\bm y} = X\hat{\bm \theta}_{\mathrm{MLE}}$. 
Note that LARS uses only correlation coefficients between the response $\bm y$ and the explanatory variables $X$ in the form of ${\bm y}^\top X {\bm \theta}$, 
which is identical with $\hat{\bm \theta}_{\mathrm{MLE}}^\top X^\top X {\bm \theta}$. 
Therefore, introducing the appropriate representation 
$\hat{\bm y} = X\hat{\bm \theta}_{\mathrm{MLE}}$
of the response $\bm y$,
we need only $X^\top X$ as $\hat{\bm y}^\top X {\bm \theta} 
= \hat{\bm \theta}_{\mathrm{MLE}}^\top X^\top X {\bm \theta}$.

In the estimation step of the proposed method, 
we run the original LARS algorithm in the tangent space $T_0 {\cal M}$ 
as if the response is $\hat{\bm y}$. 
LARS outputs a sequence $\{ \hat{\bm \theta}_{(0)}, \hat{\bm \theta}_{(1)}, 
\dots, \hat{\bm \theta}_{(d)} \}$ of the model parameter $\bm \theta$. 
As shown in Figure \ref{fig:lars_f}, 
the LARS estimator $\hat{\bm \theta}$ can be regarded as moving from the origin to the MLE $\hat{\bm \theta}_{\mathrm{MLE}}$ of the full model. 
At the same time, however, the residual 
${\bm r}(\hat{\bm \theta}) := \hat{\bm \theta}_{\mathrm{MLE}} - \hat{\bm \theta}$ 
of the estimator $\hat{\bm \theta}$ is moving 
from the MLE $\hat{\bm \theta}_{\mathrm{MLE}}$ to the origin (Figure \ref{fig:lars_b}). 
The latter is useful for our method 
because it allows us to fix the estimator's tangent space to the origin. 
The residual ${\bm r}(\hat{\bm \theta})$ moves, not the estimator $\hat{\bm \theta}$. 
Note that Algorithm \ref{algo:lars} in subsection \ref{subsec:lars} is actually described from the latter perspective.

\paragraph{LARS in Tangent Space (TLARS)}
LARS in tangent space (TLARS) is given as follows: 

\begin{enumerate}
\item 
Calculate the MLE $\hat{\bm \theta}_{\mathrm{MLE}}$ of the full model.   
\item 
Run the LARS algorithm for the design matrix $X$ 
and the response $\hat{\bm y} = X\hat{\bm \theta}_{\mathrm{MLE}}$. 
\item 
Using the sequence $\{ \hat{\bm \theta}_{(0)}, \hat{\bm \theta}_{(1)}, 
\dots, \hat{\bm \theta}_{(d)} \}$ made by LARS, 
the result is the sequence $\{ p(\cdot|\,\hat{\bm \theta}_{(0)}), 
p(\cdot|\,\hat{\bm \theta}_{(1)}), \dots, p(\cdot|\,\hat{\bm \theta}_{(d)}) \}$. 
\end{enumerate}

As a special case, 
the proposed method coincides with the original LARS 
when we consider the normal linear regression with a known variance. 
Note that TLARS is as computationally efficient as LARS 
although it solves the estimation problem of the GLM. 
Furthermore, we can use existing packages of LARS for the computation of TLARS.

\subsection{LASSO in Tangent Space}
\label{subsec:tlasso}

We propose two estimation methods. 
One is a LASSO-type modification of TLARS 
and the other is an approximation of the $l_1$-regularization for the GLM \eqref{eq:lassoglm}. 

\paragraph{LASSO in Tangent Space 1 (TLASSO1)}
By modifying the LARS algorithm so that it outputs the LASSO estimator 
\cite{EHJT2004}, 
we can use LASSO in the tangent space $T_0 {\cal M}$. 
LASSO in tangent space (TLASSO1) is formally defined as a minimization problem 
\begin{equation}
\label{eq:tlasso1}
\min_{{\bm \theta} \in \mathbb{R}^d} 
\left\{ \| X\hat{\bm \theta}_{\mathrm{MLE}}-X{\bm \theta} \|_2^2 
+ \lambda \| {\bm \theta} \|_1 \right\}, 
\end{equation}
which implies that we use the design matrix $X$ and 
the response $\hat{\bm y} = X\hat{\bm \theta}_{\mathrm{MLE}}$ in the ordinary LASSO. 
This corresponds to the LASSO modification of TLARS.

\paragraph{LASSO in Tangent Space 2 (TLASSO2)}
Another LASSO-type method is a direct approximation of 
\eqref{eq:lassoglm}. 
TLASSO2 is defined as 
\begin{equation}
\label{eq:tlasso2a}
\min_{{\bm \theta} \in \mathbb{R}^d} 
\left\{ \| \alpha X\tilde{\bm \theta}-X{\bm \theta} \|_2^2 + \lambda \| {\bm \theta} \|_1 \right\}, 
\end{equation}
where $\alpha = 1/(h^{-1})'(0)$ and $\tilde{\bm \theta}$ satisfies 
$X^\top X\tilde{\bm \theta} = X^\top {\bm y}$. 
Since the column vectors of the design matrix $X$ are assumed to be linearly independent, 
$\tilde{\bm \theta}$ uniquely exists. 
Problem \eqref{eq:tlasso2a} is LASSO for the normal linear regression 
with the design matrix $X$ and the response $\alpha X\tilde{\bm \theta}$. 
TLASSO2 \eqref{eq:tlasso2a} 
is an approximation of \eqref{eq:lassoglm}. 
In fact, 
using $\tilde{\bm \theta}$ and $\alpha$, 
the log-likelihood is approximated as follows (see subsection \ref{subsec:mleglm}): 
\begin{equation*}
\log p(y|\, {\bm \theta}) 
\approx 
-\frac{1}{2\alpha}({\bm \theta} - \alpha\tilde{\bm \theta})^\top
X^\top X ({\bm \theta} - \alpha\tilde{\bm \theta}) 
+ \frac{\alpha}{2} \tilde{\bm \theta}^\top \tilde{\bm \theta} - \psi(0). 
\end{equation*}
Note that $\alpha \tilde{\bm \theta}$ is an approximation of 
the MLE $\hat{\bm \theta}_{\mathrm{MLE}}$.

\subsection{Remarks on Other Information-Geometrical Methods}
\label{subsec:others}

We briefly compare TLARS with two existing methods 
that are extensions of LARS  based on information geometry. 
One is bisector regression (BR) by 
\cite{HK2010} 
and the other is differential-geometric LARS (DGLARS) by 
\cite{AMW2013}. 
Our concern here is about algorithm itself. 

First, the BR algorithm is very different from TLARS. 
BR takes advantage of the dually flat structure of the GLM and 
attempts to form an equiangular curve using the KL divergence. 
Furthermore, 
the BR estimator moves from the MLE of the full model to the origin 
while, in our method, the residual moves from $\hat{\bm \theta}_{\mathrm{MLE}}$ to the origin. 

DGLARS is also different from TLARS. 
It uses tangent spaces, 
where the equiangular vector is considered. 
However, the DGLARS estimator actually moves from $p(\cdot|\, 0)$ 
to $p(\cdot|\,\hat{\bm \theta}_{\mathrm{MLE}})$ in $\cal M$. 
Accordingly, 
the tangent space at the current estimator moves, 
meaning that we treat the tangent spaces at many points in $\cal M$. 
DGLARS treats the model manifold directly. 
Therefore, it requires many iterations of approximation computation for the algorithm. 
Note that, on the other hand, the update of the TLARS estimator is described fully 
in terms of only the tangent space $T_0 {\cal M}$.

\section{Numerical Examples}
\label{sec:numerical}

We present results of numerical examples and compare our methods with a related method. 
In detail, we compare four methods in the logistic regression setting: 
LARS in Tangent Space (TLARS), 
LASSO in Tangent Space (TLASSO1 and 2), 
and the $l_1$-regularized maximum likelihood estimation for the GLM (L1). 

Our methods do not require an extra implementation 
since the LARS algorithm has already been implemented in the \texttt{lars} package 
of the software R. 
Using R, 
we only needed \texttt{glm()} for calculating the MLE 
and the \texttt{lars} package for the proposed methods. 
For the computation of $l_1$-regularization, 
we used the \texttt{glmnet} package 
\cite{FHT2008}. 

\subsection{Real Data}
\label{subsec:realdata}

We applied the proposed methods and the L1 method to real data. 
We used the South Africa heart disease (SAheart) data 
included in the \texttt{ElemStatLearn} package of R. 
The data contains nine explanatory variables of 462 samples. 
The response is a binary variable. 

We report the results by the four methods. 
Figures \ref{fig:tlars} and \ref{fig:tlasso1} are the paths by TLARS and TLASSO1, respectively. 
In this example, they are the same. 
Figure \ref{fig:tlasso2} is the TLASSO2 path, 
and Figure \ref{fig:glmnet} is the L1 path. 
The paths by TLARS, TLASSO1, and TLASSO2 are made by the \texttt{lars()} function of R, 
and that of L1 by \texttt{glmnet()}. 

As Figure \ref{fig:example} shows, 
the four paths are very similar. 
The proposed methods are based only on the tangent space, 
not on the model manifold itself, 
while L1 directly takes advantage of the likelihood. 
These results imply that the approximation of the model does not require deterioration of the results 
for our methods, especially, for TLARS and TLASSO1. 

\begin{figure}[t]
 \begin{minipage}{0.5\hsize} %0.5\hsize
  \begin{center}
  \includegraphics[width=\textwidth]{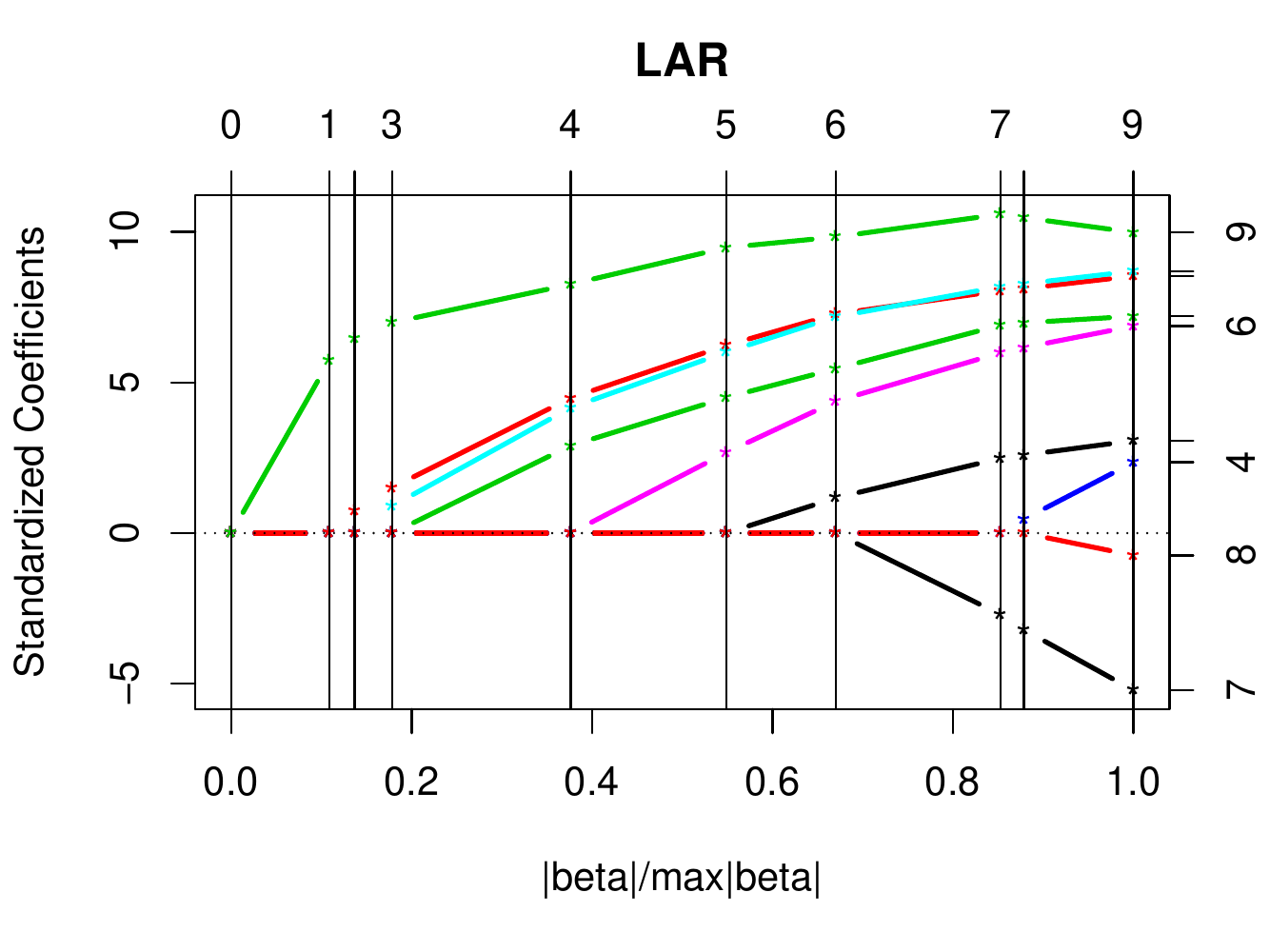}
  \end{center}
  \vspace{-5mm}
  \subcaption{TLARS}
  \label{fig:tlars}
 \end{minipage}
 \begin{minipage}{0.5\hsize}
  \begin{center}
  \includegraphics[width=\textwidth]{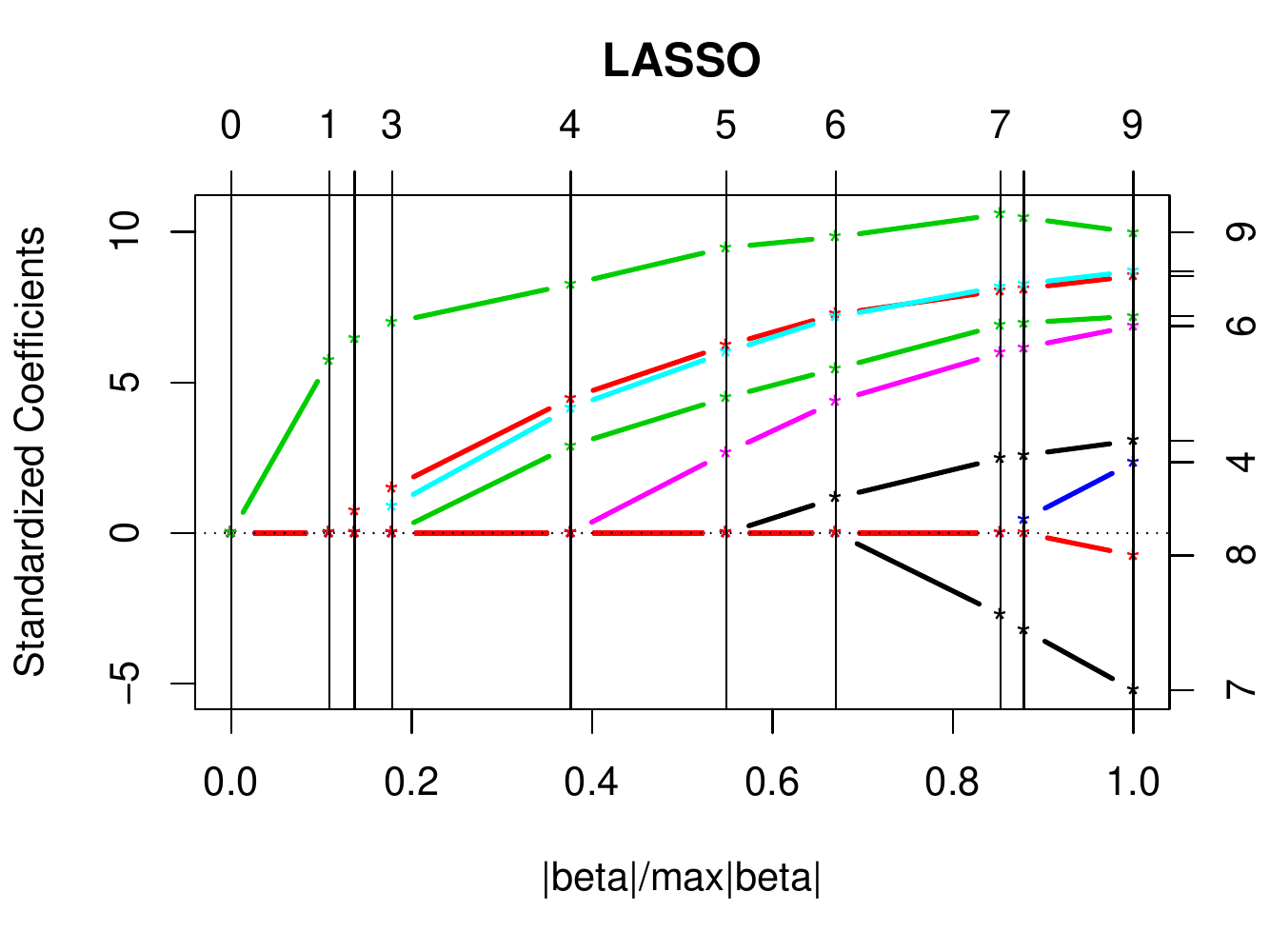}
  \end{center}
  \vspace{-5mm}
  \subcaption{TLASSO1}
  \label{fig:tlasso1}
 \end{minipage} \\
  \vspace{5mm} \\
 \begin{minipage}{0.5\hsize}
  \begin{center}
  \includegraphics[width=\textwidth]{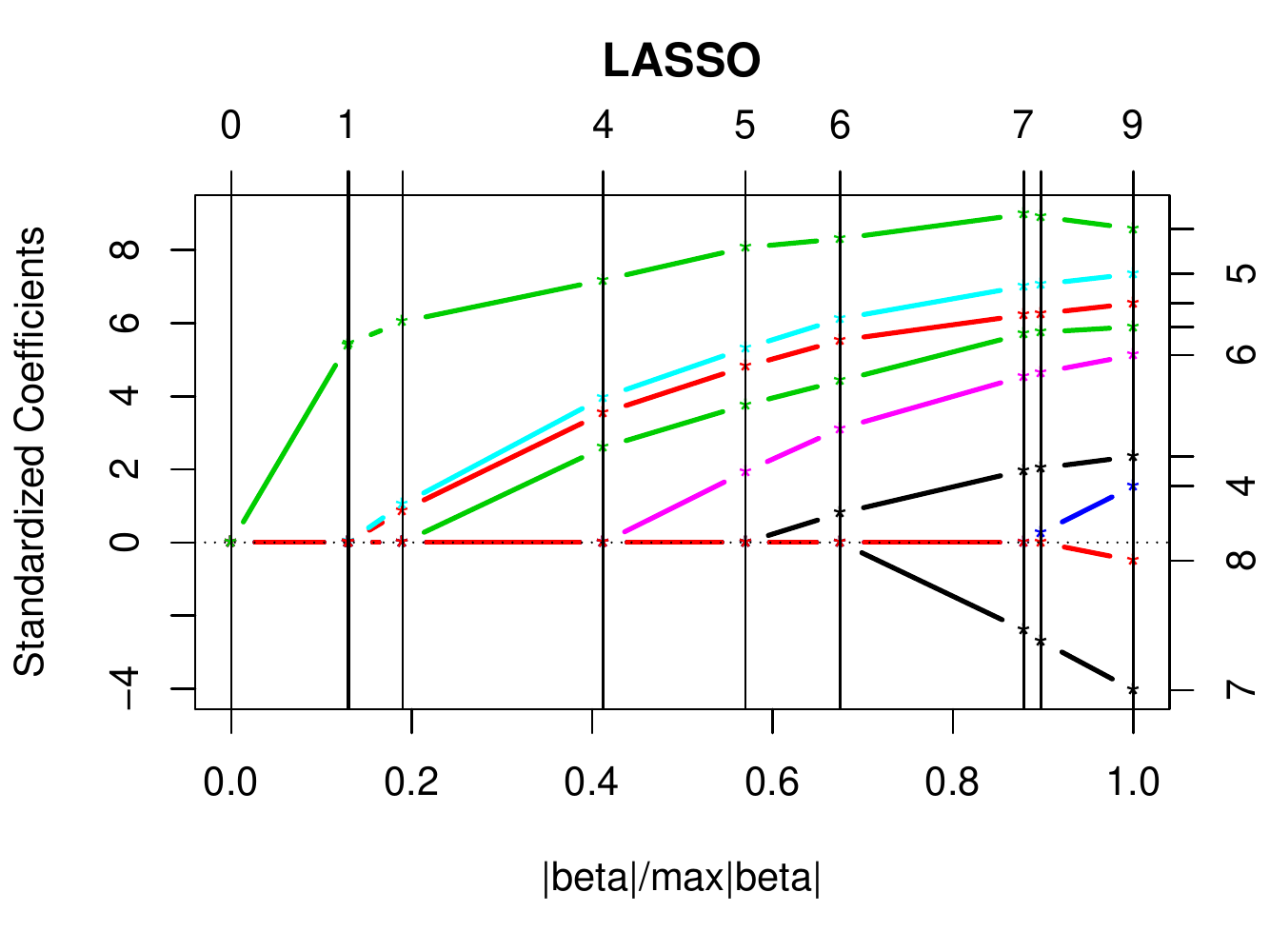}
  \end{center}
  \vspace{-5mm}
  \subcaption{TLASSO2}
  \label{fig:tlasso2}
 \end{minipage}
 \begin{minipage}{0.5\hsize}
  \begin{center}
  \includegraphics[width=\textwidth]{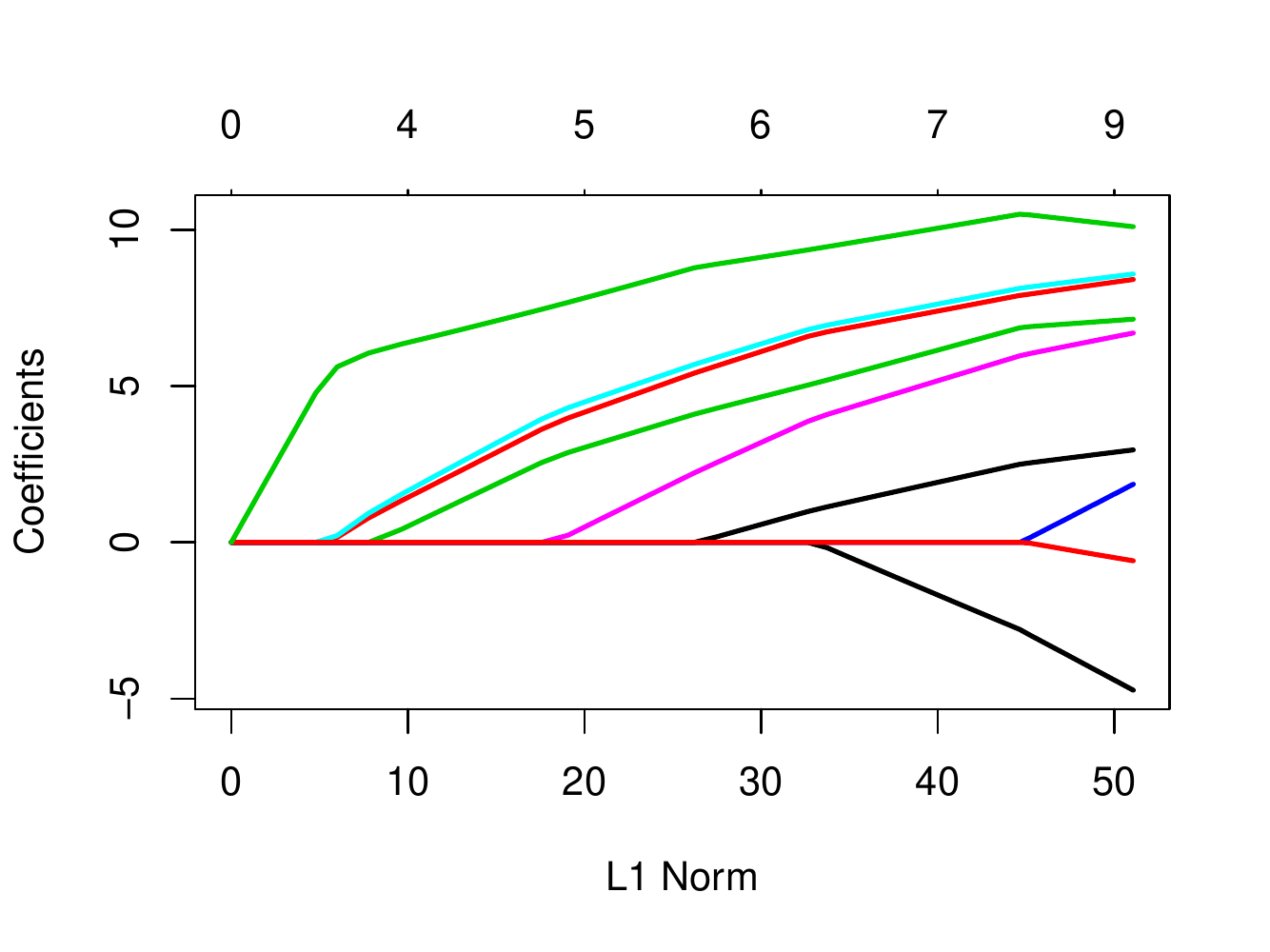}
  \end{center}
  \vspace{-5mm}
  \subcaption{L1}
  \label{fig:glmnet}
 \end{minipage}
 \caption{
 The paths obtained by \ref{fig:tlars}: TLARS, 
 \ref{fig:tlasso1}: TLASSO1, \ref{fig:tlasso2}: TLASSO2, 
 and \ref{fig:glmnet}: L1 are very similar. 
 In this example, the paths by TLARS and TLASSO1 are the same. 
 The paths by TLARS, TLASSO1, and TLASSO2 are made by the \texttt{lars()} function of R, 
 and that of L1 by \texttt{glmnet()}. 
 }
\label{fig:example}
 \end{figure}

\subsection{Numerical Experiments}
\label{subsec:experiment}

We performed numerical experiments of logistic regression. 
The topic is three-fold: generalization, parameter estimation, and model selection. 
The results are presented in Table \ref{tab:numerical}. 
Values in bold are the best and better values. 

The procedure of the experiments is as follows. 
We fixed 
the number of the parameter $d$, 
the true value ${\bm \theta}_0$ of the parameter ${\bm \theta}$, 
and the sample size $n$. 
For each of $m$ trials, 
we made the design matrix $X$ using the \texttt{rnorm()} function in R. 
Furthermore, we made the response $\bm y$ based on $X$ and ${\bm \theta}_0$, 
that is, elements of $\bm y$ have different Bernoulli distributions.   
The four methods were applied to $({\bm y}, X)$. 

For selecting one model and one estimate from a sequence of parameter estimates, 
we used AIC and BIC: 
\begin{align}
\mathrm{AIC} &= -2 \log p(y| \hat{{\bm \theta}}) + 2d', \label{eq:aic} \\
\mathrm{BIC} &= -2 \log p(y| \hat{{\bm \theta}}) + d'\log n, \label{eq:bic}
\end{align}
where $d'$ is the dimension of the parameter of the model under consideration. 
For a sequence $(\hat{\bm \theta}_{(k)})$ made by each of the four methods, 
let $I_{(k)}=\{ i|\, \hat{\theta}_{(k)}^i \not = 0 \}$ and 
$\hat{\bm \theta}_{\mathrm{MLE}}^{(k)}$ the MLE of the model 
${\cal M}_{(k)} = \{ p(\cdot|{\bm \theta})|\, \theta^j = 0 \,\,(j \not \in I_{(k)}) \}$. 
We call 
\eqref{eq:aic} with $\hat{{\bm \theta}}=\hat{\bm \theta}_{\mathrm{MLE}}^{(k)}$ AIC1, 
and \eqref{eq:aic} with $\hat{{\bm \theta}}=\hat{\bm \theta}_{(k)}$ AIC2. 
Similarly, 
\eqref{eq:bic} with $\hat{{\bm \theta}}=\hat{\bm \theta}_{\mathrm{MLE}}^{(k)}$ is BIC1, 
and \eqref{eq:bic} with $\hat{{\bm \theta}}=\hat{\bm \theta}_{(k)}$ is BIC2. 

For evaluating the generalization error of the four methods, 
we newly made $m$ observations 
$\{ ({\bm y}^{l}_1, X^{l}_1), \dots, ({\bm y}^{l}_m, X^{l}_m) \}$ in $l$-th trial ($l=1,2,\dots,m$). 
We computed the difference between $( {\bm y}^{l}_1, \dots, {\bm y}^{l}_m )$ and 
$m$ predictions by each of the methods. 
The ``Generalization'' columns of Table \ref{tab:numerical} 
report the average prediction error over $m$ trials; 
a smaller value is better. 

The ``Model selection'' columns show the proportion of the trials (among $m$ trials) 
where the methods selected the true model. 
The ``Seq'' column indicates the proportion of the trials 
where each sequence of estimates included the true model; 
a larger value is better. 

In the ``Parameter estimation'' columns, 
each value means the average of $\| \hat{\bm \theta} - {\bm \theta}_0 \|_2^2$ 
of the selected estimate $\hat{\bm \theta}$; 
a smaller value is better. 

In Table \ref{tab:numerical}, 
we report the results of three cases. 
We used $m=10,000$ for all cases except case C2, 
for which we set $m=1,000$. %for case C2. 

In case A, 
we set $d=10$ and ${\bm \theta}_0 = (10,10,10,-10,$
$-10,-10,0,0,0,0)^\top$. 
We used $n=100$ for case A1 and $n=1,000$ for A2. 
In generalization, 
three methods (TLARS, TLASSO1, and L1) with AIC2 were much better than 
the other combinations of method and information criterion. 
In model selection, 
the four methods with BIC1 were much better regardless of the sample size. 
In parameter estimation, TLARS and TLASSO1 with AIC1 and BIC2 were better in the small sample setting. 
However, in the larger sample setting, the four methods with AIC2 were better. 
These tendencies were observed in other cases not reported here;  
For example, ${\bm \theta}_0 = (10,10,-10,-10,0,0,0,0,0,0)^\top$. 

Case B is the case of $d=10$ and ${\bm \theta}_0 = (10,10,0,0,0,$ 
$0,0,0,0,0)^\top$ 
with the relation ${\bm x}_3 = {\bm x}_2 + {\bm \epsilon}$, 
where ${\bm x}_2$ and ${\bm x}_3$ are the second and third columns of the design matrix $X$, respectively,  
and $\bm \epsilon$ is distributed according to a multivariate normal distribution. 
We set $n=100$ and $n=1,000$ for cases B1 and B2, respectively. 
In generalization, 
TLARS and TLASSO1 with AIC1, BIC1, and BIC2 were better than the others in case B1. 
Three methods (TLARS, TLASSO1, and L1) with AIC1 and BIC2 were better for the larger sample setting. 
In case B, our interest is mainly in generalization 
because estimation of the true model and the parameter value are not very meaningful. 
However, the four methods with BIC1 were better in model selection. 

In case C, we used $d=50$ and, as ${\bm \theta}_0$, 
the vector of the length 50 with ten $10$s, ten $-10$s, and thirty $0$s. 
In generalization and parameter estimation, 
three methods (TLARS, TLASSO1, and L1) with AIC2 were better than the others 
regardless of the sample size. 
In model selection, 
the four methods with BIC1 were much better than the others. 

In summary, 
the proposed methods  
worked very well. 
Of course, the L1 method sometimes performs better than our methods. 
However, the proposed methods, especially TLARS and TLASSO1, are better than L1 in many situations. 
Furthermore, TLARS and TLASSO1 output the same results in very many trials.

\begin{sidewaystable}%[htb]
{ 
  \begin{center}
    \caption{ 
    The results of the numerical experiments.
Generalization: 
the average prediction error. 
Model selection: 
the proportion of the trials 
where the methods selected the true model. 
Seq: 
the proportion of the trials in which each sequence of estimates included the true model.  
Parameter estimation: 
the average of the squared error  
of the selected estimate. 
Values in bold are the best and better values. 
}
   \label{tab:numerical}
\small 
    \begin{tabular}{|c|c|rrrr|l|llll|rrrr|} \hline
	\multirow{2}{*}{} & \multirow{2}{*}{Method} & \multicolumn{4}{|c|}{Generalization\, ($\times 10^{-2}$)} & \multicolumn{5}{|c|}{Model selection} & \multicolumn{4}{|c|}{Parameter estimation} \\ \cline{3-15}
	 & & AIC1 & AIC2 & BIC1 & BIC2 & Seq & AIC1 & AIC2 & BIC1 & BIC2 & AIC1 & AIC2 & BIC1 & BIC2 \\ \hline\hline
	\multirow{4}{*}{A1} & {TLARS} & 10.70 & {\bf 9.80} & 12.97 & 10.72 & {0.7246} & 0.3969 & 0.1838 & {\bf 0.4973} & 0.3672 & {\bf 168.3} & 178.7 & 195.5 & {\bf 167.4} \\ \cline{2-15}
	 & {TLASSO1} & 10.70 & {\bf 9.80} & 12.97 & 10.72 & {0.7247} & 0.3968 & 0.1838 & {\bf 0.4974} & 0.3784 & {\bf 168.3} & 178.7 & 195.5 & {\bf 167.4} \\ \cline{2-15}
	 & {TLASSO2} & 15.73 & 12.49 & 18.35 & 15.04 & {0.7086} & 0.4062 & 0.0662 & {\bf 0.4865} & 0.2769 & 249.3 & 171.4 & 310.2 & 232.4 \\ \cline{2-15}
	 & {L1} & 18.81 & {\bf 9.74} & 22.60 & 10.46 & 0.6897 & 0.3996 & 0.0301 & {\bf 0.4824} & 0.1548 & 315.7 & 183.5 & 404.5 & {\bf 169.1} \\ \hline\hline
	\multirow{4}{*}{A2} & TLARS & 4.04 & {\bf 3.60} & 5.14 & 3.96 & {0.9785} & 0.4955 & 0.1252 & {\bf 0.8573} & 0.4988 & 58.7 & {\bf 45.6} & 99.6 & 56.4 \\ \cline{2-15}
	& TLASSO1 & 4.04 & {\bf 3.58} & 5.14 & 3.96 & {0.9785} & 0.4955 & 0.1252 & {\bf 0.8573} & 0.4988 & 58.7 & {\bf 45.6} & 99.6 & 56.4 \\ \cline{2-15}
	& TLASSO2 & 4.73 & 3.69 & 5.91 & 4.40 & {0.9787} & 0.4959 & 0.0561 & {\bf 0.8575} & 0.4022 & 79.6 & {\bf 47.2} & 126.9 & 69.0 \\ \cline{2-15}
	& L1 & 8.20 & {\bf 3.59} & 10.57 & 4.05 & {0.9732} & 0.4968 & 0.0721 & {\bf 0.8570} & 0.3810 & 234.3 & {\bf 45.4} & 352.9 & 58.7 \\ \hline\hline
	\multirow{4}{*}{B1} & {TLARS} & {13.52} & 13.89 & {\bf 13.42} & {\bf 13.16} & 0.5500 & 0.1799 & 0.1455 & {\bf 0.4293} & 0.3369 & 146.7 & 221.6 & 104.9 & 106.5 \\ \cline{2-15}
	 & {TLASSO1} & {\bf 13.33} & 13.67 & {\bf 13.28} & {\bf 13.00} & 0.5643 & 0.1784 & 0.1508 & {\bf 0.4316} & 0.3474 & 144.0 & 214.2 & {\bf 102.8} & {\bf 102.1} \\ \cline{2-15}
	 & {TLASSO2} & 14.05 & 13.94 & 14.81 & 14.23 & 0.5666 & 0.1820 & 0.1152 & {\bf 0.4366} & 0.3257 & 105.0 & 133.8 & 106.2 & {\bf 100.0} \\ \cline{2-15}
	 & {L1} & 15.98 & 14.43 & 19.16 & {13.59} & 0.5560 & 0.1814 & 0.0785 & {\bf 0.4342} & 0.2671 & 131.4 & 334.1 & 155.9 & {\bf 101.4} \\ \hline\hline
	\multirow{4}{*}{B2} & {TLARS} & {\bf 4.95} & 5.20 & 5.16 & {\bf 4.95} & 0.5848 & 0.1926 & 0.1127 & {\bf 0.5402} & 0.4157 & 96.8 & 140.6 & 89.9 & {\bf 84.8} \\ \cline{2-15}
	& TLASSO1 & {\bf 4.95} & 5.20 & 5.16 & {\bf 4.94} & 0.5852 & 0.1918 & 0.1127 & 0.{\bf 5400} & 0.4159 & 96.8 & 140.6 & 89.9 & {\bf 84.8} \\ \cline{2-15}
	& TLASSO2 & 5.00 & 5.31 & 5.29 & 5.05 & 0.5850 & 0.1926 & 0.0978 & {\bf 0.5400} & 0.4104 & 95.0 & 143.7 & 90.8 & {\bf 85.1} \\ \cline{2-15}
	& L1 & 5.90 & 5.10 & 7.62 & {\bf 4.90} & 0.5793 & 0.1925 & 0.0935 & {\bf 0.5367} & 0.3946 & 109.1 & 132.2 & 158.8 & {\bf 82.0} \\ \hline\hline
	\multirow{4}{*}{C1} & {TLARS} & 10.18 & {\bf 9.27} & 14.71 & 10.97 & 0.1479 & 0.0087 & 0.0009 & {\bf 0.0787} & 0.0237 & {\bf 373.3} & {\bf 324.8} & 751.4 & 435.9 \\ \cline{2-15}
	 & {TLASSO1} & 10.18 & {\bf 9.27} & 14.71 & 10.97 & 0.1479 & 0.0087 & 0.0009 & {\bf 0.0787} & 0.0237 & {\bf 373.3} & {\bf 324.8} & 751.4 & 435.9 \\ \cline{2-15}
	 & {TLASSO2} & 14.78 & 11.65 & 19.09 & 15.72 & 0.1399 & 0.0098 & 0.0000 & {\bf 0.0742} & 0.0147 & 811.9 & 537.0 & 1183.8 & 895.2 \\ \cline{2-15}
	 & {L1} & 13.48 & {\bf 9.48} & 19.48 & 12.05 & 0.1137 & 0.0088 & 0.0000 & {\bf 0.0706} & 0.0050 & 690.5 & {\bf 351.6} & 1215.4 & 566.4 \\ \hline\hline
	\multirow{4}{*}{C2} & TLARS & 3.98 & {\bf 3.36} & 6.22 & 4.17 & 0.773 & 0.014 & 0.000 & {\bf 0.486} & 0.077 & 247.4 & {\bf 172.1} & 608.9 & 274.2 \\ \cline{2-15}
	& TLASSO1 & 3.98 & {\bf 3.36} & 6.22 & 4.17 & 0.773 & 0.014 & 0.000 & {\bf 0.486} & 0.077 & 247.4 & {\bf 172.1} & 608.9 & 274.2 \\ \cline{2-15}
	& TLASSO2 & 4.45 & {\bf 3.53} & 6.64 & 4.57 & 0.779 & 0.014 & 0.000 & {\bf 0.486} & 0.068 & 311.6 & 190.4 & 687.7 & 329.0 \\ \cline{2-15}
	& L1 & 4.58 & {\bf 3.40} & 8.08 & 4.34 & 0.736 & 0.015 & 0.000 & {\bf 0.486} & 0.046 & 330.8 & {\bf 176.0} & 982.7 & 297.9 \\ \hline
    \end{tabular}
  \end{center}
}
\end{sidewaystable}

\section{Conclusion}
\label{sec:conclusion}

We proposed the sparse estimation methods as an extension of LARS for the GLM. 
The methods take advantage of the tangent space at the origin, 
which is a natural approximation of the model manifold. 
The proposed methods are computationally efficient because the problem is approximated by the normal linear regression. 
The numerical experiments showed that our idea worked well by comparing the proposed methods with the $l_1$-regularization for the GLM. 
One of our future works is to evaluate our methods theoretically. 
Furthermore, we will apply tools developed for LARS and LASSO to TLARS and TLASSO, 
for example, screening and post-selection inference.

\bibliographystyle{plain}
\bibliography{arxiv2019a_v3}

\appendix

\section{Lemmas and Remarks}
\label{sec:app}

Some lemmas and remarks are presented. 
We use well-known facts on the exponential family, the GLM, and information geometry. 
For details, see 
\cite{A1985, A2016, AN2000, AJLS2017, B1986, KV1997, MN1989, MR1993}. 

As introduced in subsection \ref{subsec:tlasso}, 
$\tilde{\bm \theta}$ satisfies $X^\top X \tilde{\bm \theta} = X^\top {\bm y}$ and $\alpha = 1/\tilde{h}(0)$, 
where $h$ is the link function and $\tilde{h}(0)=(h^{-1})'(0)$. 
Let $\eta_j$ be $j$-th element of the expectation parameter 
${\bm \eta}({\bm \theta}) = \mathrm{E}_{\bm \theta}[X^\top {\bm y}]$. 
Letting ${\bm \mu}({\bm \theta}) = \mathrm{E}_{\bm \theta}[{\bm y}]$, 
it holds ${\bm \eta}({\bm \theta}) = X^\top {\bm \mu}({\bm \theta})$ 
and $\mu^a({\bm \theta}) = h^{-1}(\sum_{i=1}^d x_i^a \theta^i)\,\,(a=1,2,\dots,n)$.

\subsection{Metric at Tangent Space and Correlation Between Explanatory Variables}
\label{subsec:metric}

We show that the Fisher metric $G=(g_{ij})$ at the tangent space $T_0 {\cal M}$ 
is proportional to the correlation matrix $X^\top X$ of the explanatory variables (Lemma \ref{lem:metric}). 
To avoid confusion, in this subsection, 
we use $G(0)=(g_{ij}(0))$ for the metric in $T_0 {\cal M}$ 
and $G({\bm \theta})=(g_{ij}({\bm \theta}))$ for the metric 
in the tangent space at $p(\cdot|\, {\bm \theta})$. 
\begin{lemma}
\label{lem:first}
It holds that 
\begin{align*}
\frac{\partial \psi}{\partial \theta^i}  
= \sum_{a=1}^n x_i^a h^{-1}\Big(\sum_{j=1}^d x_j^a \theta^j \Big). 
\end{align*}
\end{lemma}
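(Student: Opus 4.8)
The plan is to derive the identity from the standard exponential-family fact that the gradient of the potential (log-partition) function is the expectation of the natural sufficient statistic, and then to rewrite that expectation through the link function using the relations already recalled at the start of this appendix.

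First I would make explicit the role of $\psi$ in \eqref{eq:glm}: since $p(\cdot\,|\,{\bm \theta})$ is a probability density for every ${\bm \theta} \in \mathbb{R}^d$, the potential must be the normalizer, $\psi({\bm \theta}) = \log \int \exp\{{\bm y}^\top X {\bm \theta}\}\, d\nu({\bm y})$ for the carrier measure $\nu$ of the family. Thus $\psi$ is the cumulant generating function of the statistic $X^\top {\bm y}$ evaluated at ${\bm \theta}$. Differentiating under the integral sign, which is legitimate on the interior of the natural parameter domain (here all of $\mathbb{R}^d$) because $\psi$ is smooth there, gives
\begin{equation*}
\frac{\partial \psi}{\partial \theta^i}({\bm \theta}) = \mathrm{E}_{\bm \theta}\big[ (X^\top {\bm y})_i \big] = \eta_i({\bm \theta}),
\end{equation*}
which is precisely the $i$-th component of the expectation parameter ${\bm \eta}({\bm \theta}) = \mathrm{E}_{\bm \theta}[X^\top {\bm y}]$ introduced above.

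Next I would expand coordinatewise: $(X^\top {\bm y})_i = \sum_{a=1}^n x_i^a y^a$, so by linearity of expectation $\partial \psi/\partial \theta^i = \sum_{a=1}^n x_i^a\, \mathrm{E}_{\bm \theta}[y^a] = \sum_{a=1}^n x_i^a \mu^a({\bm \theta})$, using ${\bm \mu}({\bm \theta}) = \mathrm{E}_{\bm \theta}[{\bm y}]$. Finally I would substitute the GLM relation $\mu^a({\bm \theta}) = h^{-1}\big(\sum_{j=1}^d x_j^a \theta^j\big)$, which holds because $h$ is the (canonical) link function, to obtain the stated formula.

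The only non-bookkeeping point is the interchange of differentiation and integration in the first step; I expect this to be the ``hard'' part only in the sense that it is the one place a regularity hypothesis is invoked, and it is the standard analyticity of exponential families on the interior of their natural parameter space, so I would cite it rather than reprove it. Everything else follows directly from the identities ${\bm \eta}({\bm \theta}) = X^\top {\bm \mu}({\bm \theta})$ and $\mu^a({\bm \theta}) = h^{-1}(\sum_j x_j^a \theta^j)$ already recorded in this appendix.
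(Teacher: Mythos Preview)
Your proposal is correct and follows essentially the same route as the paper's own proof: invoke the exponential-family identity $\partial \psi/\partial \theta^i = \eta_i$, rewrite ${\bm \eta} = X^\top {\bm \mu}$, and substitute $\mu^a = h^{-1}(\sum_j x_j^a \theta^j)$. The only difference is that you spell out the justification for $\partial_i \psi = \eta_i$ via differentiation under the integral, whereas the paper simply cites it as known.
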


\begin{proof}
Since it is known that $\eta_i = \partial \psi/\partial \theta^i$, 
\begin{align*}
\left( \frac{\partial \psi}{\partial \theta^1}, \frac{\partial \psi}{\partial \theta^2}, 
\dots, \frac{\partial \psi}{\partial \theta^d} \right)
&= \left( \eta_1, \eta_2, \dots, \eta_d \right) 
= {\bm \mu}({\bm \theta})^\top X \\
&= \Big( h^{-1}\Big(\sum_{i=1}^d x_i^1 \theta^i \Big), h^{-1}\Big(\sum_{i=1}^d x_i^2 \theta^i \Big), \\
& \hspace{20mm} \dots, 
h^{-1}\Big(\sum_{i=1}^d x_i^n \theta^i \Big) \Big) X. 
\end{align*}
\end{proof}
\begin{lemma}
\label{lem:metric}
$G(0) = c X^\top X$ for some $c > 0$. 
\end{lemma}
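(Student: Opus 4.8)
The plan is to compute the Fisher metric $G({\bm \theta})$ in closed form, evaluate it at ${\bm \theta}=0$, and read off the constant $c$. First I would invoke the standard exponential-family fact for the model \eqref{eq:glm}: writing $l({\bm \theta}) = {\bm y}^\top X {\bm \theta} - \psi({\bm \theta})$, the Hessian $\partial_i \partial_j l({\bm \theta}) = -\partial_i \partial_j \psi({\bm \theta})$ is non-random, so the information matrix equality gives $g_{ij}({\bm \theta}) = \mathrm{E}[\partial_i l\,\partial_j l] = -\mathrm{E}[\partial_i \partial_j l] = \partial_i \partial_j \psi({\bm \theta})$. In other words, the Fisher metric is the Hessian of the potential function.

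Next I would differentiate the expression obtained in Lemma \ref{lem:first} once more with respect to $\theta^j$. Since $\partial \psi/\partial \theta^i = \sum_{a=1}^n x_i^a h^{-1}\big(\sum_{k} x_k^a \theta^k\big)$, the chain rule yields
\begin{equation*}
g_{ij}({\bm \theta}) = \frac{\partial^2 \psi}{\partial \theta^i \partial \theta^j} = \sum_{a=1}^n x_i^a x_j^a (h^{-1})'\Big( \sum_{k=1}^d x_k^a \theta^k \Big).
\end{equation*}
Setting ${\bm \theta}=0$ makes the argument of $(h^{-1})'$ vanish for every $a$, so $(h^{-1})'(0)$ factors out, giving $g_{ij}(0) = (h^{-1})'(0)\sum_{a=1}^n x_i^a x_j^a = (h^{-1})'(0)\,(X^\top X)_{ij}$. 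Hence $G(0) = c\,X^\top X$ with $c = (h^{-1})'(0)$, which is precisely the reciprocal of the constant $\alpha$ appearing in subsection \ref{subsec:tlasso}.

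Finally I would verify $c>0$. One way: $G(0)$, being a Fisher information matrix, is positive semidefinite, while $X^\top X$ is positive definite because the columns of $X$ are assumed linearly independent; from $G(0)=cX^\top X$ this forces $c\ge 0$, and $c\ne 0$ since otherwise $G(0)$ would be degenerate, so $c>0$. Alternatively, the canonical link makes $h^{-1}$ strictly increasing, so $(h^{-1})'(0)>0$ directly. I do not expect a genuine obstacle here; the only step that deserves a word of care is the identity $g_{ij}=\partial_i\partial_j\psi$, i.e. that differentiation under the expectation is valid for this family, which is a standard regularity property of exponential families and may simply be cited.
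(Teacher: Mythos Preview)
Your proof is correct and follows essentially the same route as the paper: it uses the exponential-family identity $g_{ij}=\partial_i\partial_j\psi$, differentiates the formula from Lemma~\ref{lem:first} to obtain $g_{ij}({\bm\theta})=\sum_a x_i^a x_j^a (h^{-1})'(\sum_k x_k^a\theta^k)$, evaluates at ${\bm\theta}=0$ to get $c=(h^{-1})'(0)=\tilde h(0)$, and then argues positivity from the definiteness of $G(0)$ and $X^\top X$. The only cosmetic difference is that the paper routes the first differentiation through $\eta_j$ and $\mu^a$ rather than applying the chain rule directly, and it asserts $G(0)$ is positive definite outright rather than arguing via semidefiniteness plus nondegeneracy; your added remark linking $c$ to $1/\alpha$ is correct and consistent with subsection~\ref{subsec:tlasso}.
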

\begin{proof}
It is known that the metric $g_{ij}$ is derived from the potential function $\psi$: 
$
g_{ij}({\bm \theta}) = \partial_i \partial_j \psi({\bm \theta}) 
$. 
Therefore, it holds 
\begin{align*}
g_{ij}({\bm \theta}) 
&= \partial_i \partial_j \psi({\bm \theta}) 
= \partial_i \eta_j({\bm \theta}) \\
&= \sum_{a=1}^n x_i^a \partial_j \mu^a({\bm \theta})
= \sum_{a=1}^n x_i^a \partial_j h^{-1}\Big(\sum_{k=1}^d x_k^a \theta^k \Big)\\
&= \sum_{a=1}^n x_i^a x_j^a \tilde{h}\Big(\sum_{k=1}^d x_k^a \theta^k \Big), 
\end{align*}
where $\tilde{h}$ is the derivative of $h^{-1}$. 
Letting ${\bm \theta}=0$ and $c=\tilde{h}(0)$, 
we have $G(0) = c X^\top X$. 
Since both $G(0)$ and $X^\top X$ are known to be positive definite, 
$c$ is a positive constant. 
\end{proof}
Note that $c$ is common to all $i,j$ and $a$ in the proof. 
Hence, the tangent space $T_0 {\cal M}$ at the origin ${\bm \theta} = 0$ is selected 
as the space in which LARS runs.

\subsection{Approximations of the Likelihood and MLE}
\label{subsec:mleglm}

We approximate the log-likelihood and the MLE of the GLM \eqref{eq:glm}. 
Lemma \ref{lem:log-lik} implies that 
$\alpha \tilde{{\bm \theta}}$ is an approximation of the MLE $\hat{\theta}$

\begin{lemma}
\label{lem:log-lik}
The log-likelihood is expanded as 
\begin{equation*}
\log p(y|\, {\bm \theta}) 
= -\frac{1}{2\alpha}({\bm \theta} - \alpha\tilde{\bm \theta})^\top
X^\top X ({\bm \theta} - \alpha\tilde{\bm \theta}) 
+ \frac{\alpha}{2} \tilde{\bm \theta}^\top \tilde{\bm \theta} - \psi(0)  
+ O(\| {\bm \theta} \|^3). 
\end{equation*}
\end{lemma}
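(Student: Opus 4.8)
The plan is to obtain the quadratic model by Taylor-expanding the potential $\psi$ about the origin and substituting the derivative evaluations supplied by the preceding lemmas. Write the log-likelihood as $\log p(y|\,{\bm\theta}) = {\bm y}^\top X{\bm\theta} - \psi({\bm\theta})$. Since $\psi$ is the cumulant function of the exponential family \eqref{eq:glm} (for the canonical link, $\psi({\bm\theta}) = \sum_{a} b(\sum_i x_i^a\theta^i)$ with $b$ the one-dimensional cumulant function, which is analytic on the interior of its domain, and ${\bm\theta}=0$ lying in that interior), $\psi$ is $C^3$ near $0$, so
\[
\psi({\bm\theta}) = \psi(0) + \nabla\psi(0)^\top{\bm\theta} + \tfrac12\,{\bm\theta}^\top \nabla^2\psi(0)\,{\bm\theta} + O(\|{\bm\theta}\|^3).
\]

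Next I would evaluate $\nabla\psi(0)$ and $\nabla^2\psi(0)$. By Lemma \ref{lem:first}, $\partial_i\psi(0) = h^{-1}(0)\sum_{a=1}^n x_i^a$, and the assumed normalization $\sum_{a=1}^n x_i^a = 0$ makes this vanish, so the first-order term disappears. By Lemma \ref{lem:metric} (via the identity $g_{ij}=\partial_i\partial_j\psi$ used there), $\nabla^2\psi(0) = cX^\top X$ with $c = \tilde h(0) = (h^{-1})'(0) = 1/\alpha$. Hence
\[
\log p(y|\,{\bm\theta}) = {\bm y}^\top X{\bm\theta} - \psi(0) - \tfrac{1}{2\alpha}\,{\bm\theta}^\top X^\top X\,{\bm\theta} + O(\|{\bm\theta}\|^3).
\]

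Finally I would rewrite the linear term through $\tilde{\bm\theta}$ and complete the square. Using $X^\top X\tilde{\bm\theta} = X^\top{\bm y}$ gives ${\bm y}^\top X{\bm\theta} = \tilde{\bm\theta}^\top X^\top X{\bm\theta}$, and then
\[
\tilde{\bm\theta}^\top X^\top X{\bm\theta} - \tfrac{1}{2\alpha}\,{\bm\theta}^\top X^\top X\,{\bm\theta}
= -\tfrac{1}{2\alpha}({\bm\theta}-\alpha\tilde{\bm\theta})^\top X^\top X({\bm\theta}-\alpha\tilde{\bm\theta}) + C,
\]
with $C$ a constant independent of ${\bm\theta}$; collecting the ${\bm\theta}$-free terms into a single constant yields the expansion displayed in the statement. (The same completion of the square shows $\alpha\tilde{\bm\theta}$ is the minimizer of the quadratic, hence an approximation of the MLE, which is the point of the lemma.)

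The main obstacle is not the algebra but the justification of the remainder: one has to confirm that $\psi$ is genuinely smooth on a fixed neighborhood of $0$ — which holds for logistic and Poisson regression, and more generally on the interior of the natural parameter space of an exponential family — and that the cubic bound is uniform on such a neighborhood, so that ``$O(\|{\bm\theta}\|^3)$'' is meaningful as ${\bm\theta}\to 0$. Everything else is bookkeeping: the normalization $\sum_a x_i^a = 0$ kills the first-order term, Lemmas \ref{lem:first} and \ref{lem:metric} identify the zeroth- and second-order data, and completing the square rearranges the quadratic.
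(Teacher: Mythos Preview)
Your proof is correct and follows essentially the same route as the paper: Taylor-expand $\psi$ at $0$, use Lemma~\ref{lem:first} together with the column normalization to kill the first-order term, use Lemma~\ref{lem:metric} to identify the Hessian as $\tfrac{1}{\alpha}X^\top X$, replace ${\bm y}^\top X{\bm\theta}$ by $\tilde{\bm\theta}^\top X^\top X{\bm\theta}$, and complete the square. The only difference is cosmetic---the paper writes out the constant explicitly rather than absorbing it into a generic $C$, and does not comment on the smoothness of $\psi$ needed for the $O(\|{\bm\theta}\|^3)$ remainder.
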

\begin{proof}
Using $\alpha = 1/\tilde{h}(0)$ and Lemmas \ref{lem:first} and \ref{lem:metric}, 
the potential function $\psi$ is expanded as follows: 
\begin{align*}
\psi({\bm \theta}) 
&= 
\psi(0) + \left( \frac{\partial \psi}{\partial \theta^1}(0), \frac{\partial \psi}{\partial \theta^2}(0), 
\dots, \frac{\partial \psi}{\partial \theta^d}(0) \right)  {\bm \theta} 
+ \frac{1}{2} {\bm \theta}^\top G(0) {\bm \theta}  + O(\| {\bm \theta} \|^3) \\
&= 
\psi(0) + h^{-1}(0){\bm 1}^\top X{\bm \theta} 
+ \frac{1}{2\alpha} {\bm \theta}^\top X^\top X {\bm \theta} 
+ O(\| {\bm \theta} \|^3) \\
&= \psi(0) + \frac{1}{2\alpha} {\bm \theta}^\top X^\top X {\bm \theta} 
+ O(\| {\bm \theta} \|^3). 
\end{align*}
At the last equal sign, we used ${\bm 1}^\top X = 0$ since each column vector of $X$ is assumed to be normalized. 
Therefore, 
\begin{align*}
\log p(y|\, {\bm \theta}) 
&= {\bm y}^\top X {\bm \theta} - \psi({\bm \theta}) \\
&= \tilde{\bm \theta}^\top X^\top X {\bm \theta} - \psi({\bm \theta}) \\ 
&= \tilde{\bm \theta}^\top X^\top X {\bm \theta} - 
\left\{
\psi(0) 
+ \frac{1}{2\alpha} {\bm \theta}^\top X^\top X {\bm \theta}
+ O(\| {\bm \theta} \|^3)
\right\} \\
&= -\frac{1}{2\alpha}({\bm \theta} - \alpha\tilde{\bm \theta})^\top
X^\top X ({\bm \theta} - \alpha\tilde{\bm \theta}) 
+ \frac{\alpha}{2} \tilde{\bm \theta}^\top \tilde{\bm \theta} - \psi(0) 
+ O(\| {\bm \theta} \|^3). 
\end{align*}
\end{proof}

\subsection{e-Exponential Map}
\label{subsec:e-exp}

In Riemannian geometry, 
a point in a tangent space is mapped to a manifold via an exponential map, 
which is defined using a geodesic. 
A geodesic in a manifold corresponds to a straight line in Euclidean space. 
When we consider an exponential map, 
we need to introduce not only a metric but also a connection, 
which determines flatness and straightness in a manifold.  
In section \ref{sec:larsts}, 
we implicitly introduced the e-connection. 
From the viewpoint of the e-connection, 
each curve of $\theta^i$-axis is an e-geodesic in $\cal M$.  

For a manifold $\cal M$ and a point $p \in {\cal M}$, 
an exponential map $f$ at $p$ is formally defined as follows. 
First, we consider the geodesic $\gamma_v(t)$ for $v \in T_p{\cal M}$ 
which satisfies $\gamma_v(0) = p$ and $\mathrm{d} \gamma_v(t)/\mathrm{d}t|_{t=0} = v$. 
Here, the parameter $t$ moves in an interval including $0$. 
Note that, given a connection, the geodesic $\gamma_v$ locally exists and is uniquely determined.  
The exponential map $f$ is $f: T_p{\cal M} \rightarrow {\cal M}$ and $f(v) = \gamma_v(1)$ for $v \in D \subset T_p{\cal M}$, 
where $D = \{ v \in T_p{\cal M}|\, \gamma_v(1)\,\, \mathrm{exists} \}$.

In general, 
an exponential map is not necessarily easy to treat. 
For example, the domain $D$ of such a map is known as a star-shaped domain 
and does not coincide with a whole tangent space. 
However, our exponential map $\mathrm{Exp_0}: T_0{\cal M} \rightarrow M$ has a useful property.  
The domain of $\mathrm{Exp_0}$ is a whole $T_0{\cal M}$ and the range is a whole $\cal M$. 

\begin{lemma}
\label{lem:exp}
The map $\mathrm{Exp}_0: T_0{\cal M} \rightarrow M$ defined in subsection \ref{subsec:ig} is 
the e-exponential map for a manifold of the GLM. 
Furthermore, $\mathrm{Exp}_0$ is a bijection 
from the tangent space $T_0 {\cal M}$ to the manifold $\cal M$. 
\end{lemma}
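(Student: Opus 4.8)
The plan is to exploit the standard information-geometric fact that the GLM family \eqref{eq:glm} is an exponential family, hence e-flat, with the natural parameter $\bm\theta$ serving as an e-affine coordinate system. First I would recall that, by the very definition of an e-affine coordinate system, the Christoffel symbols of the e-connection vanish identically in the $\bm\theta$-coordinates, so the e-geodesic equation collapses to $\ddot\gamma^k(t)=0$, whose solutions are the affine lines $\gamma^k(t)=a^k+b^kt$.

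Next I would identify the e-geodesic emanating from the origin. For $v=\sum_i v^i\partial_i\in T_0{\cal M}$, the conditions $\gamma_v(0)=p(\cdot|\,0)$ and $\dot\gamma_v(0)=v$ read, in coordinates, $a^k=0$ and $b^k=v^k$, so $\gamma_v(t)=p(\cdot|\,t{\bm v})$ with ${\bm v}=(v^i)$. For the GLMs considered (e.g.\ logistic, Poisson) the natural parameter space is all of $\mathbb{R}^d$, so the line $t\mapsto t{\bm v}$ stays inside the chart for every $t$; hence $\gamma_v(1)$ always exists, the domain $D$ of the exponential map is the whole tangent space $T_0{\cal M}$, and $\mathrm{Exp}_0(v)=\gamma_v(1)=p(\cdot|\,{\bm v})$. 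This is exactly the map introduced in subsection \ref{subsec:ig}, which settles the first claim.

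For bijectivity, note that under the canonical identification $T_0{\cal M}\cong\mathbb{R}^d$, $v\mapsto(v^i)$, the map $\mathrm{Exp}_0$ becomes the parametrization $\bm\theta\mapsto p(\cdot|\,\bm\theta)$. Surjectivity onto $\cal M$ is immediate from the definition \eqref{eq:glm}. For injectivity, $p(\cdot|\,\bm\theta_1)=p(\cdot|\,\bm\theta_2)$ forces ${\bm y}^\top X(\bm\theta_1-\bm\theta_2)$ to be constant in $y$, hence $X\bm\theta_1=X\bm\theta_2$; since the columns of $X$ are assumed linearly independent, $X$ has full column rank and $\bm\theta_1=\bm\theta_2$. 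Therefore $\mathrm{Exp}_0$ is a bijection from $T_0{\cal M}$ onto $\cal M$.

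The main obstacle is conceptual rather than computational: one must keep straight that ``exponential map'' here refers to the (flat) e-connection and not to the Levi-Civita connection of the Fisher metric, and one must observe that the usual caveat restricting an exponential map to a star-shaped neighborhood disappears precisely because the e-affine chart covers all of $\mathbb{R}^d$. Once this is in place, the remaining work is the elementary full-column-rank argument above.
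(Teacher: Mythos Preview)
Your proof is correct and follows essentially the same approach as the paper: both identify the e-geodesic through the origin as $\gamma_v(t)=p(\cdot\mid t{\bm v})$, conclude $\mathrm{Exp}_0(v)=\gamma_v(1)=p(\cdot\mid{\bm v})$, and establish bijectivity from the parametrization together with the full column rank of $X$. You supply a bit more detail---explicitly invoking the vanishing Christoffel symbols for the e-geodesic equation and spelling out why $p(\cdot\mid{\bm\theta}_1)=p(\cdot\mid{\bm\theta}_2)$ forces $X{\bm\theta}_1=X{\bm\theta}_2$---but the structure and key ideas are the same.
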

\begin{proof}
For $v = \sum_{i=1}^d v^i \partial_i \in T_0 {\cal M}$, 
the value of the map is $\mathrm{Exp_0}(v) = p(\cdot|\,{\bm v})$, 
where ${\bm v} = (v^i)$. 
It is known that the e-geodesic $\gamma(t)$ satisfying $\gamma(0)=p(\cdot|\,0)$ 
and $\mathrm{d} \gamma(t)/\mathrm{d}t|_{t=0}=v \in T_0 {\cal M}$ is represented as 
$\gamma(t)=p(\cdot|\,t{\bm v})$. 
Therefore, $\mathrm{Exp_0}(v) = p(\cdot|\,{\bm v}) = \gamma(1)$, 
which means that $\mathrm{Exp}_0$ is the e-exponential map. 

Since ${\cal M}= \{ p(\cdot|\,{\bm \theta})|\, {\bm \theta} \in \mathbb{R}^d \}$, 
the e-exponential map is defined on a whole $T_0{\cal M}$. 
For ${\bm \theta} \in \mathbb{R}^d$, 
$w=\sum_{i=1}^d \theta^i \partial_i$ is in $T_0 {\cal M}$ and $\mathrm{Exp_0}(w) = p(\cdot|\,{\bm \theta})$, 
which implies that the e-exponential map is a surjection. 
Furthermore, if $v, w \in T_0 {\cal M}$ are different, $\mathrm{Exp_0}(v) \not = \mathrm{Exp_0}(w)$ 
because the column vectors of $X$ are assumed to be linearly independent. 
\end{proof}

\end{document}